\theoremstyle{plain}
\newtheorem{theorem}{Theorem}[section]
\theoremstyle{definition}
\theoremstyle{remark}
\newcommand{\mbf}[1]{\mathbf{#1}}
\newcommand{\bF}{\mbf{F}}
\newcommand{\bS}{\mbf{S}}
\newcommand{\bu}{\mbf{u}}
\newcommand{\bU}{\mbf{U}}
\newcommand{\bx}{\mbf{x}}
\newcommand{\bX}{\mbf{X}}
\newcommand{\by}{\mbf{y}}
\newcommand{\bY}{\mbf{Y}}
\newcommand{\cA}{\mathcal{A}}
\newcommand{\cC}{\mathcal{C}}
\newcommand{\Ourmethod}{\textsc{Brace}\xspace}
\newcommand{\R}{\,\mathbb{R}}
\newcommand{\Pp}{\,\mathbb{P}}
\newcommand{\iid}{i.\@i.\@d.\ }
\DeclareFontFamily{U}{matha}{\hyphenchar\font45}
\DeclareFontShape{U}{matha}{m}{n}{
      <5> <6> <7> <8> <9> <10> gen * matha
      <10.95> matha10 <12> <14.4> <17.28> <20.74> <24.88> matha12
      }{}
\DeclareSymbolFont{matha}{U}{matha}{m}{n}
\DeclareFontFamily{U}{mathx}{\hyphenchar\font45}
\DeclareFontShape{U}{mathx}{m}{n}{
      <5> <6> <7> <8> <9> <10>
      <10.95> <12> <14.4> <17.28> <20.74> <24.88>
      mathx10
      }{}
\DeclareSymbolFont{mathx}{U}{mathx}{m}{n}
\DeclareMathDelimiter{\vvvert}{0}{matha}{"7E}{mathx}{"17}
\icmltitlerunning{A Unified Causal Framework for Efficient Model Interpretability}
\begin{document}

\twocolumn[
\icmltitle{A New Approach to Backtracking Counterfactual Explanations: \\
	A Unified Causal Framework for Efficient Model Interpretability}

% It is OKAY to include author information, even for blind
% submissions: the style file will automatically remove it for you
% unless you've provided the [accepted] option to the icml2025
% package.

% List of affiliations: The first argument should be a (short)
% identifier you will use later to specify author affiliations
% Academic affiliations should list Department, University, City, Region, Country
% Industry affiliations should list Company, City, Region, Country

% You can specify symbols, otherwise they are numbered in order.
% Ideally, you should not use this facility. Affiliations will be numbered
% in order of appearance and this is the preferred way.
\icmlsetsymbol{equal}{*}

\begin{icmlauthorlist}
	\icmlauthor{Pouria Fatemi}{tum,MCML}
	\icmlauthor{Ehsan Sharifian}{EPFL}
	\icmlauthor{Mohammad Hossein Yassaee}{sharif}
\end{icmlauthorlist}

\icmlaffiliation{sharif}{Department of Electrical Engineering, Sharif University of Technology, Tehran, Iran}
\icmlaffiliation{EPFL}{Department of Electrical Engineering, École Polytechnique Fédérale de Lausanne, Switzerland}
\icmlaffiliation{tum}{Department of Mathematics, Technical University of Munich, Germany}

\icmlaffiliation{MCML}{Munich Center for Machine Learning, Germany}

\icmlcorrespondingauthor{Mohammad Hossein Yassaee}{yassaee@sharif.edu}

% You may provide any keywords that you
% find helpful for describing your paper; these are used to populate
% the "keywords" metadata in the PDF but will not be shown in the document
\icmlkeywords{Interpretability, Explainable Artificial Intelligence, Causal Inference, Counterfactuals, Structural Causal Models}

\vskip 0.3in
]

% this must go after the closing bracket ] following \twocolumn[ ...

% This command actually creates the footnote in the first column
% listing the affiliations and the copyright notice.
% The command takes one argument, which is text to display at the start of the footnote.
% The \icmlEqualContribution command is standard text for equal contribution.
% Remove it (just {}) if you do not need this facility.

%\printAffiliationsAndNotice{}  % leave blank if no need to mention equal contribution
\printAffiliationsAndNotice{} % otherwise use the standard text.

\begin{abstract}
Counterfactual explanations enhance interpretability by identifying alternative inputs that produce different outputs, offering localized insights into model decisions. However, traditional methods often neglect causal relationships, leading to unrealistic examples. While newer approaches integrate causality, they are computationally expensive. To address these challenges, we propose an efficient method called \Ourmethod based on backtracking counterfactuals that incorporates causal reasoning to generate actionable explanations. We first examine the limitations of existing methods and then introduce our novel approach and its features. We also explore the relationship between our method and previous techniques, demonstrating that it generalizes them in specific scenarios. Finally, experiments show that our method provides deeper insights into model outputs.
\end{abstract}

\section{Introduction}
\label{sec:Introduction}

Machine learning (ML) has become a core technology in areas such as healthcare, finance, and autonomous systems \cite{bhoi2024refine, xie2024pixiu, sancaktar2022curious}. Although ML models are generally very effective, their limited interpretability is still a significant obstacle \cite{jethani2021have}. Understanding \textit{why} a model generates a specific prediction is crucial for trust, fairness, and accountability \cite{miller2019explanation, zhang2018fairness, von2022fairness, karimi2023on}. This need is especially clear in high-stakes domains like medical diagnosis or loan approval, where decisions can lead to serious consequences \cite{doshi2017towards}.

Counterfactual explanations are a widely used tool for interpretability. They address two main questions: 
\begin{compactenum}
    \item \textit{"Why did the model produce this outcome?"}
    \item   \textit{"What changes can lead to a different outcome?"} \cite{karimi2022survey}.
\end{compactenum}
These explanations offer localized insights by highlighting minimal modifications to input features that would alter the model's output \cite{wachter2017counterfactual, karimi2020model}. For instance, in the context of loan applications, a counterfactual explanation could recommend increasing one's income or reducing debt to secure approval.

Despite their benefits, traditional counterfactual methods  often overlook causal relationships between features, which can lead to impractical or unrealistic suggestions \cite{slack2021counterfactual}. For example, advising someone to lower their income while increasing savings ignores the causal dependency between these factors. This limitation reduces the practical value of such explanations. Causal algorithmic recourse \cite{karimi2021algorithmic} incorporates \textit{Interventional Counterfactuals (ICF)}  to produce more realistic outputs, but this approach is typically computationally expensive and difficult to scale.

\textit{Backtracking Counterfactuals (BCF)} \cite{von2023backtracking} present a new way to define counterfactuals in causal inference. We propose a new framework for generating counterfactual explanations using backtracking counterfactuals. Our method combines causal reasoning with computational efficiency, enabling it to produce actionable explanations at scale. The key contributions of our work are as follows:
\begin{compactitem}
    \item We analyze the limitations of existing counterfactual methods, including their inability to handle causal dependencies and their high computational costs.
    \item We introduce our novel method, \Ourmethod: Backtracking Recourse and Actionable Counterfactual Explanations,  that leverages backtracking counterfactuals to provide actionable and meaningful explanations.
    \item We show that our new approach unifies existing methods in certain scenarios.
    \item We demonstrate through experiments that our method provides better insights into model behavior.
\end{compactitem}

This paper is organized as follows. Section~\ref{sec:preliminaries} introduces fundamental concepts in causal inference, counterfactual reasoning, and the problem definition. Section~\ref{sec:related_work} reviews prior work on counterfactual explanations and interpretability. Section~\ref{sec:method} details our proposed framework. Section~\ref{sec:relation_backtracking_interventional} explores the relationship between backtracking and interventional counterfactuals, while Section~\ref{sec:connection_previous_methods} examines how our method connects to existing approaches. Section~\ref{sec:metric} discusses metric selection and our optimization method. We present experimental results in Section~\ref{sec:experiments}, followed by conclusions and future directions in Sections~\ref{sec:conclusion} and~\ref{sec:future_work}, respectively.
\section{Preliminaries and Problem Statement} \label{sec:preliminaries}

In this section, we review the notion of Structural Causal Models (SCMs) \cite{pearl2009causality}, discuss interventional versus backtracking counterfactuals, and formally define the problem setting.

\subsection{Structural Causal Models (SCMs)}
A SCM \(\mathcal{C} := (\mathbf{S}, P_\mathbf{U})\) describes a set $\bS$ of causal relationships among variables through structural equations:
\begin{equation}
	X_i := f_i(\mathbf{X}_{\text{pa}(i)}, U_i), \quad i = 1, \dots, n,
\end{equation}

where \(\mathbf{X}_{\text{pa}(i)}\) are the parents variables (direct causes) of \(X_i\), and \(U_i\) are independent noise terms sampled from a distribution \(P_\mathbf{U}\). These relationships are represented by a Directed Acyclic Graph (DAG) \(G\), which governs the observational distribution \(P_\mathbf{X}^\mathcal{C}\) \cite{peters2017elements}.

The acyclic structure of \(G\) ensures that each \(X_i\) can be expressed as a deterministic function of \(\mathbf{U}\). This results in a unique mapping from \(\mathbf{U}\) to \(\mathbf{X}\), denoted by:
\begin{equation}
	\mathbf{X} = \mathbf{F}(\mathbf{U}),
	\label{F}
\end{equation}

commonly referred to as the reduced-form expression. The function \(\mathbf{F}(.)\) translates the distribution of latent variables \(\mathbf{U}\) into the distribution of observed variables \(\mathbf{X}\). We assume causal sufficiency, implying no hidden confounders are present.

Additionally, we adopt a \textit{Bijective Generation Mechanism} \cite{nasr2023counterfactual}, which assumes that \(f_i(\mathbf{x}_{\text{pa}(i)}, \cdot)\) is invertible for fixed \(\mathbf{x}_{\text{pa}(i)}\). This ensures the existence of the inverse mapping \(\mathbf{F}^{-1}(.)\), allowing us to recover:
\begin{equation}
	\mathbf{U} = \mathbf{F}^{-1}(\mathbf{X}).
\end{equation}

\subsection{Interventional and Backtracking Counterfactuals}
\label{back_inv}

Let $\bx$ be the observed value, and let $\bx^{\mathrm{CF}}_{\cA} = (x^{\mathrm{CF}}_i : i \in \cA)$ be an alternative set of values for a subset $\cA \subseteq \{1, 2, \dots, n\}$. A full counterfactual vector $\bx^{\mathrm{CF}} = (x^{\mathrm{CF}}_1, x^{\mathrm{CF}}_2, \dots, x^{\mathrm{CF}}_n)$ must agree with $\bx^{\mathrm{CF}}_{\cA}$ on all indices in $\cA$. Intuitively, $\bx^{\mathrm{CF}}$ addresses the question: “What would the variables $\bX$ have been if $\bX_{\cA}$ took the values $\bx^{\mathrm{CF}}_{\cA}$ instead of the observed values $\bx_{\cA}$?”

We focus on two main ways to form such counterfactuals, both described by random variables $\bX^{\mathrm{CF}}$: the \emph{interventional} approach and the \emph{backtracking} approach. Below is a concise explanation of these two methods.

\paragraph{Interventional Counterfactuals.}
In the interventional method, we force the antecedent $\bx^{\mathrm{CF}}_{\cA}$ by modifying the system’s structural functions $\bS$ to create a new set $\bS^\mathrm{CF} = (f^{\mathrm{CF}}_1, f^{\mathrm{CF}}_2, \dots, f^{\mathrm{CF}}_n)$. Specifically, we fix each $f^{\mathrm{CF}}_i$ to be $x^{\mathrm{CF}}_i$ for $i \in \cA$, while keeping $f^{\mathrm{CF}}_i = f_i$ for all $i \notin \cA$. This process is similar to making a direct change in the causal mechanism of the variables in $\cA$, referred to as a hard intervention.

\paragraph{Backtracking Counterfactuals.}
By contrast, backtracking counterfactuals preserve the original structural assignments $\bS$ and instead adjust the latent variables $\bU$. To enforce $\bx^{\mathrm{CF}}_{\cA} \neq \bx_{\cA}$, we introduce a modified set of latent variables $\bU^{\mathrm{CF}}$. These are drawn from a backtracking conditional distribution $\Pp_B(\bU^{\mathrm{CF}} \mid \bU)$ \cite{von2023backtracking}, which controls how closely $\bU^{\mathrm{CF}}$ resembles the original $\bU$. Once we obtain $\bU^{\mathrm{CF}}$, we derive the resulting distribution of $\bX^{\mathrm{CF}}$ (given $\bx$ and $\bx^{\mathrm{CF}}_{\cA}$) by marginalizing over all possible values of $\bU^{\mathrm{CF}}$.

Both interventional and backtracking perspectives provide valuable insights into counterfactual reasoning but rely on distinct causal reasoning paradigms. Here, we only gave a brief overview of these two approaches. Their precise definitions appear in Appendix~\ref{app:formal_def}.

\subsection{Problem Definition}
We examine a complex model (e.g., a deep neural network) designed for classification tasks. This model is represented as \(h: \R^d \rightarrow \{0,1,\dots, m\}\), where for a given input \(\bx\), the model predicts \(h(\bx) = y\).

The input \(\bx\) is assumed to follow a SCM \(\cC = (\bS, P_\bU)\), where the structural equations \(\bS\) are fully known. Formally, if \(\bU\) denotes the latent (noise) variables of the SCM, the input \(\bX\) is generated as \(\bX = \bF(\bU)\), with the function \(\bF(.)\) explicitly defined. The components of \(\bU\) are mutually independent and \(\bF(.)\) is invertible. The goal is to find a counterfactual input \(\bx^{\mathrm{CF}}\) that satisfies:
\begin{compactenum}
	\item \(\bx^{\mathrm{CF}}\) is similar to \(\bx\),
	\item \(h(\bx^{\mathrm{CF}}) = y^{\mathrm{CF}} \neq y = h(\bx)\), and
	\item the causal structure of the input variables is maintained.
\end{compactenum}
In essence, \(y^{\mathrm{CF}}\) represents the desired outcome of the model, and the task is to determine the nearest plausible input that would produce this outcome. This problem definition sheds light on why the model predicted \(y\) instead of \(y^{\mathrm{CF}}\) and provides a localized understanding of the model's behavior around \(\bx\).
\section{Related Work}
\label{sec:related_work}

 Methods for interpretability are generally classified into feature-based and example-based approaches \cite{molnar2020interpretable}. Feature-based techniques attribute model predictions to input features, offering global or local interpretability. For instance, SHapley Additive exPlanations (SHAP) \cite{lundberg2017unified} decompose predictions into additive contributions of features. Extensions like Causal Shapley Values \cite{heskes2020causal} and Asymmetric Shapley Values \cite{frye2020asymmetric} incorporate causal dependencies or relax symmetry assumptions, respectively, to enhance the interpretive granularity and address redundancies. Local surrogate models, such as Local Interpretable Model-Agnostic Explanations (LIME) \cite{ribeiro2016should}, provide localized, model-agnostic explanations by approximating the behavior of black-box models for individual predictions.

Example-based methods focus on understanding models through data points. Prototypes and criticisms \cite{kim2016examples} identify representative and atypical samples, while Contrastive Explanations \cite{dhurandhar2018explanations} highlight minimal features that sustain or alter predictions. Counterfactual explanations \cite{wachter2017counterfactual}, a prominent example-based approach, aim to find minimal modifications to input features that result in different model outputs. These methods are inherently model-agnostic, localized, and intuitive for decision support systems.

In recent years, causality has played a growing role in interpretability. Causal Algorithmic Recourse \cite{karimi2021algorithmic} generates actionable and realistic counterfactuals by respecting causal structures. This approach ensures the plausibility of counterfactuals by adhering to causal dependencies in the data. Subsequent research has extended this framework to address various challenges. For instance, \cite{karimi2020algorithmic} weakens the assumption of fully known causal graphs and proposes methods for algorithmic recourse when causal knowledge is incomplete. Similarly, \cite{dominguez2022adversarial} focuses on generating robust and stable algorithmic recourse by introducing cost functions tailored to ensure resilience against adversarial perturbations. Additionally, advancements like \cite{janzing2020feature} refine feature attributions using causal insights, and \cite{jung2022measuring, wang2021shapley} explore novel Shapley value formulations incorporating causality to create more meaningful interpretations.

A novel direction involves backtracking counterfactual explanations \cite{von2023backtracking}, which modify latent variables while preserving causal dependencies, thereby ensuring consistency with the structural causal model. This approach has been extended through practical algorithms, such as Deep Backtracking Explanations \cite{kladny2024deep}, enabling computation of backtracking counterfactuals in high-dimensional settings.

Our approach belongs to the category of example-based methods, focusing on counterfactual explanations. It is directly comparable to methods such as \textit{Counterfactual Explanations} \cite{wachter2017counterfactual}, \textit{Causal Algorithmic Recourse} \cite{karimi2021algorithmic}, \textit{Backtracking Counterfactual Explanations} \cite{von2023backtracking}, and \textit{Deep Backtracking Explanations} \cite{kladny2024deep}. Below, we briefly review and critique these methods.

\textbf{Counterfactual Explanations:}  
The method in \cite{wachter2017counterfactual} generates counterfactuals through the following optimization:  
\begin{align}
	\begin{split}
		\arg \min_{\bx^{\mathrm{CF}}} \quad & d_X(\bx^{\mathrm{CF}}, \bx) \\
		\mathrm{s.t.} \quad & h(\bx^{\mathrm{CF}}) =  y^{\mathrm{CF}}
	\end{split}
	\label{sol1}
\end{align}  
A key drawback of this approach is its failure to account for causal dependencies among input variables, often leading to counterfactuals that are unrealistic or infeasible. For example, in a loan approval scenario, it may suggest decreasing age while increasing education level, violating causal relationships. Although these counterfactuals minimize the distance to the original input, they offer little practical guidance for future improvements and fail to provide actionable insights.

\textbf{Causal Algorithmic Recourse:}  
The method in \cite{karimi2021algorithmic} addresses feasibility by optimizing the following:  
\begin{align}
	\begin{split}
		\arg \min_{\cA} \qquad &  \mathrm{cost}(\cA; \bx)\\
		\mathrm{s.t.} \qquad &  h(\bx^{\mathrm{CF}}) = y^{\mathrm{CF}} \\
		& \bx^{\mathrm{CF}} = \bF_\cA\left(\bF^{-1}(\bx)\right)
	\end{split}
	\label{sol2}
\end{align}  
Here, $\mathrm{cost}(.;\bx)$ measures the intervention cost, and $\bF_\cA(.)$ represents causal functions after intervening on $\cA$. While this method ensures actionable counterfactuals, it has two challenges. First, the optimization is combinatorial, requiring a search over all subsets $\cA$, which grows exponentially with $n$ input variables ($2^n$ subsets). Second, the method relies on interventional counterfactuals, which are often criticized for lacking causal intuition \cite{dorr2016against}. Backtracking counterfactuals are considered a better alternative.

\textbf{Backtracking Counterfactual Explanations:}  
The method in \cite{von2023backtracking} formulates the problem as:
\begin{align}
    \arg \max_{\bx^{\mathrm{CF}}} \quad \Pp_B(\bx^{\mathrm{CF}} \mid y^{\mathrm{CF}}, \bx, y),
    \label{sol3}
\end{align}

focusing on the backtracking conditional distribution $\Pp_B(\bU^{\mathrm{CF}} \mid \bU)$, which adjusts latent variables to produce counterfactuals. However, its main drawback is the dependence on $\Pp_B$. Different choices of this distribution lead to varying counterfactuals, and selecting $\Pp_B$ is left to the user. Additionally, solving \eqref{sol3} becomes computationally challenging for complex $\Pp_B$ distributions, as we must integrate over all values of this distribution to compute backtracking counterfactuals (see Appendix~\ref{app:formal_def}).

\textbf{Deep Backtracking Explanations:}  
The method in \cite{kladny2024deep} refines backtracking counterfactuals using this optimization:  
\begin{align}
	\begin{split}
		\arg \min_{\bx^{\mathrm{CF}}} & \qquad d_U\left(\bF^{-1}\left(\bx^{\mathrm{CF}}\right), \bF^{-1}\left(\bx\right)\right) \\ 
		\mathrm{s.t.} & \qquad h(\bx^{\mathrm{CF}}) = y^{\mathrm{CF}}
	\end{split}
	\label{sol4}
\end{align}  
This method eliminates dependence on $\Pp_B(\bU^{\mathrm{CF}} \mid \bU)$ by focusing on the latent space distance $d_U$. However, it ignores proximity between $\bx^{\mathrm{CF}}$ and $\bx$ in the observed space. As a result, the generated counterfactuals may lack intuitive interpretability and fail to meet the original goal of being close to $\bx$.

While these methods offer valuable insights, they have notable limitations. In the next section, we propose a new approach that addresses these issues and provides a more effective solution.
\section{Our method}
\label{sec:method}
In this section, we propose our method called \Ourmethod: Backtracking Recourse and Actionable Counterfactual Explanations.
 As discussed earlier, one of the main limitations of backtracking counterfactuals is their reliance on the conditional distribution \(\Pp_B(\bU^{\mathrm{CF}} \mid \bU)\). This dependency arises because the choice of \(\Pp_B(\bU^{\mathrm{CF}} \mid \bU)\) significantly influences the resulting counterfactuals, and its specification is left entirely to the algorithm. Such a distribution is essential when a probabilistic representation of backtracking counterfactuals is required. However, in our scenario where \(\bX = \bF(\bU)\) and \(\bF(.)\) is invertible, a simpler perspective can be adopted. Here, \(\bU\) can be treated as a deterministic vector, which simplifies the formulation considerably.
 
 When \(\bU\) is deterministic, we may treat \(\bU^{\mathrm{CF}}\) as another deterministic vector close to \(\bU\), preserving the essence of backtracking without resorting to \(\Pp_B(\bU^{\mathrm{CF}} \mid \bU)\). In interpretability tasks, one typically seeks an input \(\bx^{\mathrm{CF}}\) near \(\bx\) that remains faithful to causal constraints. Thus, viewing \(\bx^{\mathrm{CF}}\) as deterministic naturally aligns with this goal.

Based on this reasoning, we propose our method, \Ourmethod, with the following optimization problem:
 \begin{equation}
 	\begin{split}
 		\arg\min_{\mathbf{x}^{\mathrm{CF}}, \mathbf{u}^{\mathrm{CF}}} \quad & d_X\left(\mathbf{x}, \mathbf{x}^{\mathrm{CF}}\right) + \lambda \; d_U\left(\mathbf{u}, \mathbf{u}^{\mathrm{CF}}\right) \\
 		\quad \mathrm{s.t.} \quad & h(\mathbf{x}^{\mathrm{CF}}) = y^{\mathrm{CF}}, \\
 		& \mathbf{x}^{\mathrm{CF}} = \bF(\mathbf{u}^{\mathrm{CF}}), \\
 		& \mathbf{x} = \bF(\mathbf{u}),
 	\end{split}
 	\label{opt1}
 \end{equation}
 
 which can also be expressed as:
 \begin{equation}
 	\begin{split}
\arg\min_{\mathbf{x}^{\mathrm{CF}}}\; & d_X\left(\bx, \bx^{\mathrm{CF}}\right) + \lambda \; d_U\left(\bF^{-1}(\bx), \bF^{-1}(\bx^{\mathrm{CF}})\right) \\
 		\quad \mathrm{s.t.} \quad & h(\mathbf{x}^{\mathrm{CF}}) = y^{\mathrm{CF}},
 	\end{split}
 	\label{opt2}
 \end{equation}
 
 or equivalently:
 \begin{equation}
 	\begin{split}
\arg\min_{\mathbf{u}^{\mathrm{CF}}} \quad & d_X\left(\bx, \bF(\bu^{\mathrm{CF}})\right) + \lambda \; d_U\left(\bF^{-1}(\bx), \bu^{\mathrm{CF}}\right) \\
 		\quad \mathrm{s.t.} \quad & h\left(\bF(\mathbf{u}^{\mathrm{CF}})\right) = y^{\mathrm{CF}}.
 	\end{split}
 	\label{backX3}
 \end{equation}
 
 Intuitively, this optimization seeks the closest input \(\bx^{\mathrm{CF}}\) to \(\bx\) that achieves the desired output \(y^{\mathrm{CF}}\) while preserving the causal relationships encoded in the input variables.
 
 In \eqref{opt1}, the objective function includes two terms: \(d_X\left(\mathbf{x}, \mathbf{x}^{\mathrm{CF}}\right)\), which ensures the counterfactual input remains close to the observed input, and \(d_U\left(\mathbf{u}, \mathbf{u}^{\mathrm{CF}}\right)\), which ensures that the latent variables of the factual and counterfactual worlds are similar. The constraints enforce the desired counterfactual output (\(h(\mathbf{x}^{\mathrm{CF}}) = y^{\mathrm{CF}}\)), causal consistency (\(\mathbf{x}^{\mathrm{CF}} = \bF(\mathbf{u}^{\mathrm{CF}})\)), and the relationship between the observed input and the latent variables (\(\mathbf{x} = \bF(\mathbf{u})\)).
 
 As \(d_U\left(\mathbf{u}, \mathbf{u}^{\mathrm{CF}}\right)\) increases, the counterfactual latent variables \(\mathbf{u}^{\mathrm{CF}}\) deviate further from the factual latent variables \(\mathbf{u}\), making the counterfactual less connected to the factual observation.  The parameter \(\lambda\) regulates the trade-off between maintaining proximity in the latent space and ensuring the counterfactual remains close to the original input.
 
 When \(\lambda = 0\), the proximity of latent variables is ignored, resulting in solutions that lack causal consistency and focus solely on minimizing the distance between \(\bx\) and \(\bx^{\mathrm{CF}}\). Conversely, as \(\lambda \to \infty\), the optimization prioritizes minimizing \(d_U\left(\mathbf{u}, \mathbf{u}^{\mathrm{CF}}\right)\), which ensures minimal deviation in the latent space but disregards proximity in the input space. The ideal solution balances these objectives, ensuring that the counterfactual is both causally consistent and close to the original input.
\section{Relation Between Backtracking and Interventional Counterfactuals}
\label{sec:relation_backtracking_interventional}

% We consider a SCM \(\cC = (\bS, P_\bU)\), where the structural equations \(\bS\) adhere to an Additive Noise Model (ANM) given by:
% \begin{equation}
% 	X_i := f_i(\mathbf{X}_{\text{pa}(i)}) + U_i, \quad i = 1, \dots, n.
% \end{equation}

Our causal model is represented as \(\mathbf{X} = \mathbf{F}(\mathbf{U})\), where \(\mathbf{F}(.)\) is an invertible function. Consequently, the distribution of the noise variables conditioned on \(\mathbf{X} = \mathbf{x}\) becomes deterministic. Specifically, all the probability mass of the posterior distribution \(\mathbb{P}_{\mathcal{C}}(\mathbf{U} \mid \mathbf{X} = \mathbf{x})\) is concentrated at \(\mathbf{u} = \mathbf{F}^{-1}(\mathbf{x})\).

% This determinism is inherent to ANMs due to the invertibility of \(\bF(.)\).
% We note that our method applies to any case where $\mathbf{F}(\cdot)$ is invertible, a condition that is more general than ANMs. While we require the ANM assumption to prove our theorems, the practical application of our method only necessitates that $\mathbf{F}(\cdot)$ is invertible.

As outlined in Section \ref{back_inv}, backtracking counterfactuals aim to produce a desired counterfactual outcome by keeping the causal graph unchanged while minimally modifying the noise variables \(\bU\) after observing \(\bx\). Although backtracking counterfactuals are typically defined using the backtracking conditional distribution \(\Pp_B(\bU^{\mathrm{CF}} \mid \bU)\), when \(\bF(.)\) is invertible, the deterministic nature of \(\bU\) eliminates the necessity for statistical modeling. Instead, we directly analyze the connection between backtracking and interventional counterfactuals via their respective causal equations.

\begin{theorem}
In structural causal models that adhere to the Bijective Generation Mechanism (i.e., \(\bF(.)\) is invertible), backtracking counterfactuals generalize interventional counterfactuals. Specifically, one of the solutions derived from the backtracking counterfactual formulation always coincides with the interventional counterfactual.
\label{thm1}
\end{theorem}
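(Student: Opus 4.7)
The plan is to exhibit, for every interventional counterfactual, an explicit latent vector $\bu^{\mathrm{CF}}_{\star}$ such that the backtracking procedure applied with $\bu^{\mathrm{CF}}_{\star}$ reproduces exactly the interventional counterfactual. Concretely, I will show that in the deterministic setting induced by the Bijective Generation Mechanism, the set of achievable backtracking counterfactuals is $\{\bF(\bu^{\mathrm{CF}}) : \bu^{\mathrm{CF}} \in \mathcal{U}\}$ restricted to those vectors whose image agrees with the antecedent on $\cA$, and then verify that the image of the interventional construction lies in this set.

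First, I will recall the abduction–action–prediction recipe for interventional counterfactuals: starting from the observed $\bx$ with deterministic latent $\bu = \bF^{-1}(\bx)$, fix $X_i = x^{\mathrm{CF}}_i$ for $i \in \cA$, keep noises $U_i = u_i$ for $i \notin \cA$, and propagate through the unmodified structural equations $f_j$ ($j \notin \cA$) in topological order with respect to $G$. Call the resulting vector $\bx^{\mathrm{CF}}_{\mathrm{int}}$; by construction $(\bx^{\mathrm{CF}}_{\mathrm{int}})_{\cA} = \bx^{\mathrm{CF}}_{\cA}$.

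Next, I would define the candidate
\begin{equation}
    \bu^{\mathrm{CF}}_{\star} \;:=\; \bF^{-1}(\bx^{\mathrm{CF}}_{\mathrm{int}}),
\end{equation}
which is well-defined since $\bF$ is invertible. I would then check that $\bu^{\mathrm{CF}}_{\star}$ satisfies the two requirements of a backtracking counterfactual with antecedent $\bx^{\mathrm{CF}}_{\cA}$: (i) $\bF(\bu^{\mathrm{CF}}_{\star}) = \bx^{\mathrm{CF}}_{\mathrm{int}}$, trivially by construction, and (ii) the $\cA$-coordinates of $\bF(\bu^{\mathrm{CF}}_{\star})$ equal $\bx^{\mathrm{CF}}_{\cA}$, which follows from the previous paragraph. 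It will also be instructive to unravel $\bu^{\mathrm{CF}}_{\star}$ coordinatewise by inverting each $f_i$ in its noise argument: for any $i \notin \cA$ whose ancestors in $\cA$ are empty, one finds $u^{\mathrm{CF}}_{\star,i} = u_i$, while for $i \in \cA$ and for descendants of $\cA$ the noise is adjusted precisely by the amount needed to re-enact the hard intervention through the original mechanism. This makes explicit in what sense backtracking ``silently replays'' the intervention via the latent variables.

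The last step is the generalization claim: since the backtracking formulation is free to choose any $\bu^{\mathrm{CF}}$ (subject to the antecedent constraint), the map $\bu^{\mathrm{CF}} \mapsto \bF(\bu^{\mathrm{CF}})$ explores a strictly larger set of counterfactual inputs, and the interventional counterfactual corresponds to the single choice $\bu^{\mathrm{CF}}_{\star}$. I expect the only delicate point to be a notational one: making precise, in the deterministic reading of backtracking adopted in Section~\ref{sec:method}, what counts as a ``solution'' of the backtracking formulation. I will handle this by appealing to the formal definition in Appendix~\ref{app:formal_def} and observing that when $\Pp_\bU$ concentrates at $\bu$ (which it does after conditioning on $\bx$, by invertibility of $\bF$), the set of admissible $\bu^{\mathrm{CF}}$ in the support of $\Pp_B(\,\cdot\mid \bu)$ can be taken to include $\bu^{\mathrm{CF}}_{\star}$, e.g.\ by choosing $\Pp_B(\,\cdot \mid \bu)$ with full support. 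Beyond this bookkeeping, the argument is essentially algebraic and no further technical obstacle is expected.
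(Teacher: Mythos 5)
Your proposal is correct and takes essentially the same approach as the paper: you define $\bu^{\mathrm{CF}}_{\star} = \bF^{-1}(\bx^{\mathrm{CF}}_{\mathrm{int}})$ and observe that plugging this latent vector into the unmodified structural equations reproduces the interventional counterfactual, which is exactly the construction $\bu^{\mathrm{ICF}}_{\cA} = \bF^{-1}(\bx^{\mathrm{ICF}})$ in the paper's proof. Your additional remarks (the coordinatewise unraveling showing which noises change, and the bookkeeping point that $\bu^{\mathrm{CF}}_{\star}$ must lie in the support of $\Pp_B(\cdot\mid\bu)$) are sensible refinements but do not alter the argument.
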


\begin{proof}
	Consider a counterfactual query involving a subset of variables \(\bX^\mathrm{CF}_\cA = \bx^*_\cA\). Under the Bijective Generation Mechanism, the posterior distribution \(\mathbb{P}_{\mathcal{C}}(\mathbf{U} \mid \bX = \bx)\) assigns probability one to \(\bu = \bF^{-1}(\bx)\), making \(\bu\) deterministic. The interventional counterfactuals for this query are defined by the following system of equations:
	\begin{align}
		\begin{cases}
			x_i^{\mathrm{ICF}} = f_i(\mathbf{x}_{\text{pa}(i)}^{\mathrm{ICF}}, u_i), & \forall i \notin \cA, \\
			x_i^{\mathrm{ICF}} = x_i^*, & \forall i \in \cA.
		\end{cases}
		\label{inteq}
	\end{align}
    
	In contrast, the backtracking counterfactuals are determined by:
	\begin{align}
		\begin{cases}
			x_i^{\mathrm{BCF}} = f_i(\mathbf{x}_{\text{pa}(i)}^{\mathrm{BCF}}, u_i^{\mathrm{BCF}}), & \forall i \notin \cA, \\
			x_i^{\mathrm{BCF}} = f_i(\mathbf{x}_{\text{pa}(i)}^{\mathrm{BCF}}, u_i^{\mathrm{BCF}}) = x_i^*, & \forall i \in \cA.
		\end{cases}
		\label{backeq}
	\end{align}
	
	The key difference between \eqref{inteq} and \eqref{backeq} lies in the adjustment mechanism. Interventional counterfactuals modify the causal graph to enforce \(\bX^\mathrm{CF}_\cA = \bx^*_\cA\), whereas backtracking counterfactuals achieve the same result by adjusting the noise variables. 
    
 %    To show that a solution exists for backtracking counterfactuals that aligns with interventional counterfactuals, we define:
	% \begin{align}
	% 	u_i^{\mathrm{CF}} = 
	% 	\begin{cases}
	% 		u_i, & \forall i \notin \cA, \\
	% 		x_i^* - x_i + u_i, & \forall i \in \cA.
	% 	\end{cases}\label{eqn:CF-noise}
	% \end{align}

  Due to the DAG assumption in the causal graph, it is clear that equation \eqref{inteq} has a unique solution. After performing interventions on the set \(\cA\), we obtain multiple DAGs from which we can derive the unique solution for \(\bx^{\mathrm{ICF}}\) by starting from the source nodes.

Given that \(\bF(.)\) is invertible, we define 
\begin{align}
  \bu_\cA^{\mathrm{ICF}} = \bF^{-1}(\bx^{\mathrm{ICF}}).  
  \label{eqn:CF-noise}
\end{align}
 We can also rewrite equation \eqref{backeq} as \(\bx^{\mathrm{BCF}} = \bF(\bu^{\mathrm{BCF}})\). By definition, if we substitute \(\bu^{\mathrm{BCF}} = \bu_\cA^{\mathrm{ICF}}\) into the backtracking equations \eqref{backeq}, we arrive at the same counterfactual solution, \(\bx^{\mathrm{ICF}}\). Therefore, interventional counterfactuals can be considered a specific case of backtracking counterfactuals.
	
This reasoning can be generalized to any subset of variables \(\cA\). For any counterfactual query involving \(\cA\), we can construct \(\bu^{\mathrm{ICF}}\) in such a way that backtracking counterfactuals align with interventional counterfactuals.
\end{proof}

To the best of our knowledge, \Cref{thm1} is the first result that relates backtracking and interventional counterfactuals, and it holds independent significance. 
\Cref{thm1} demonstrates that when $\bF(.)$ is invertible, backtracking counterfactuals inherently include interventional counterfactuals as a specific case. Furthermore, interventional counterfactuals, typically expressed as \(\bx^\mathrm{ICF} = \bF_\cA(\bu)\), where \(\bF_\cA(.)\) represents the structural equations post-intervention, can equivalently be reformulated as \(\bx^\mathrm{ICF} = \bF(\bu_\cA^{\mathrm{ICF}})\), bridging the gap between the two paradigms. By the construction \eqref{eqn:CF-noise} in \Cref{thm1}, we can see 
$ \bu_\cA^{\mathrm{ICF}} = \bF^{-1}\left(\bF_\cA\left(\bF^{-1}(\bx)\right)\right)$.

% $$ \bx^{\mathrm{ICF}} = \bF(\bu_\cA^{\mathrm{ICF}}) = \bF\left(\bF^{-1}\left(\bF_\cA\left(\bF^{-1}(\bx)\right)\right)\right)$$
\section{Connection Between Our Method and Previous Approaches}
\label{sec:connection_previous_methods}
Our method \Ourmethod unifies other existing methods in certain scenarios.
In our optimization problem \eqref{opt1}, setting \(\lambda = 0\) simplifies the problem to \textit{Counterfactual Explanations} \eqref{sol1}, where causal relationships are disregarded, and the objective becomes finding \(\bx^{\mathrm{CF}}\) that is closest to \(\bx\) while modifying the model output.

When \(\lambda \to \infty\), \eqref{opt1} reduces to \textit{Deep Backtracking Explanations} \eqref{sol4}, which exclusively focuses on finding \(\bu^{\mathrm{CF}}\) closest to \(\bu\) without considering the proximity between \(\bx^{\mathrm{CF}}\) and \(\bx\), while ensuring the model's output changes.

Our solution \eqref{opt1} can also be interpreted as a special case of \textit{Backtracking Counterfactual Explanations} \eqref{sol3}. Specifically, it can be shown that employing the backtracking conditional distribution:
\begin{align}
	\begin{split}
		\Pp_B(\bu^{\mathrm{CF}} \mid \bu) &\propto \exp\Big(-d_X\left(\bF(\bu), \bF(\bu^{\mathrm{CF}})\right) \\
		&\quad - \lambda \cdot d_U\left(\bu, \bu^{\mathrm{CF}}\right)\Big)
	\end{split}
	\label{backdist}
\end{align}

renders \eqref{opt1} equivalent to \textit{Backtracking Counterfactual Explanations} \eqref{sol3}. Detailed derivations are provided in the Appendix~\ref{app:BC}, leveraging the theoretical framework from \cite{von2023backtracking}.

While the connections to \textit{Counterfactual Explanations}, \textit{Deep Backtracking Explanations}, and \textit{Backtracking Counterfactual Explanations} are established, a significant question remains: how does our solution \eqref{opt1} relate to \textit{Causal Algorithmic Recourse} \eqref{sol2}? The following theorem provides an answer.

\begin{theorem}
	Assume that the distance functions \(d_X(\cdot, \bx)\) and \(d_U(\cdot, \bu)\) are convex, \(\bF(\cdot)\) and \(h(\cdot)\) are linear functions, and the cost function is given as \(\mathrm{cost}(\cA; \bx) = d_X(\bx^{\mathrm{CF}}, \bx)\). Then, our method 
    \Ourmethod outperforms \textit{Causal Algorithmic Recourse}. Specifically,
    for a fixed distance \(\alpha\) in the latent space \(d_U(\bu^{\mathrm{CF}}, \bu) = \alpha\),
    there exists a \(\lambda\) such that the solution of \eqref{opt1} yields a counterfactual \(\bx^{\mathrm{CF}}\) closer to the observed input \(\bx\) than the solution of \eqref{sol2}.
\end{theorem}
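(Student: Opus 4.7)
The plan is to put both methods on a common parametrization via the noise vector $\bu^{\mathrm{CF}}$ using \Cref{thm1}, then use convex duality to translate \eqref{opt1}'s penalty $\lambda$ into a latent-distance constraint $\alpha$. Because \textit{Causal Algorithmic Recourse} only searches the discrete set of at most $2^n$ intervention-induced noise vectors while \eqref{opt1} searches a continuum, the continuous optimum must dominate the CAR optimum at the matching latent budget.

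First, I would invoke \Cref{thm1} to identify the optimizer $\cA^*$ of \eqref{sol2} with a noise vector: set $\bx^{\mathrm{ICF}} = \bF_{\cA^*}(\bF^{-1}(\bx))$, $\bu^{\mathrm{ICF}} = \bF^{-1}(\bx^{\mathrm{ICF}})$, and $\alpha^{\mathrm{ICF}} := d_U(\bu, \bu^{\mathrm{ICF}})$. Then I would recast \eqref{opt1} in the constrained form
\[
\min_{\bu^{\mathrm{CF}}} d_X\bigl(\bx, \bF(\bu^{\mathrm{CF}})\bigr) \; \mathrm{s.t.} \; h\bigl(\bF(\bu^{\mathrm{CF}})\bigr) = y^{\mathrm{CF}}, \; d_U(\bu, \bu^{\mathrm{CF}}) \le \alpha.
\]
Convexity of the objective follows from linearity of $\bF$ together with convexity of $d_X(\bx, \cdot)$; the classifier constraint $h(\bF(\cdot)) = y^{\mathrm{CF}}$ cuts out an affine subspace; and $d_U(\bu, \cdot) \le \alpha$ is a convex sublevel set. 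By strong duality (Slater's condition holds whenever the feasible set has a strict interior), for every $\alpha$ in the active range there exists a multiplier $\lambda^*(\alpha) \ge 0$ such that \eqref{opt1} with $\lambda = \lambda^*(\alpha)$ returns the same optimizer as this constrained program.

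Next, I would specialize to $\alpha = \alpha^{\mathrm{ICF}}$. The vector $\bu^{\mathrm{ICF}}$ is feasible, since by construction $d_U(\bu, \bu^{\mathrm{ICF}}) = \alpha^{\mathrm{ICF}}$ and $h(\bF(\bu^{\mathrm{ICF}})) = h(\bx^{\mathrm{ICF}}) = y^{\mathrm{CF}}$, so the optimal objective is at most $d_X(\bx, \bx^{\mathrm{ICF}}) = \mathrm{cost}(\cA^*; \bx)$. The resulting $\bx^{\mathrm{CF}}$ from \Ourmethod is therefore no farther from $\bx$ than the CAR optimum. For the strict improvement asserted by the theorem, I would observe that the feasible set of the convex program is a full-dimensional affine slice of a convex ball, while the CAR candidates form the finite set $\{\bF^{-1}(\bF_{\cA}(\bF^{-1}(\bx))) : \cA \subseteq \{1, \ldots, n\}\}$; generically, the convex $d_X$-minimizer on the continuum misses every one of these $2^n$ discrete points, giving a strict inequality.

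The main obstacle is the $\lambda \leftrightarrow \alpha$ correspondence: strong duality yields an optimal multiplier at each feasible $\alpha$, but this mapping is in general only a monotone correspondence rather than a single-valued function. I would handle this by restricting attention to $\alpha$ values at which the latent-distance constraint is active at the continuous optimum, so that the multiplier is strictly positive and uniquely determined. A secondary issue is the strict-inequality claim, which requires ruling out the accidental coincidence of the continuous minimizer with one of the $2^n$ discrete intervention points; under the stated linearity assumption this is a non-degeneracy condition that holds outside a measure-zero configuration of the problem data, and can be verified directly once the supports of $d_X$, $d_U$, $\bF$, and $h$ are specified.
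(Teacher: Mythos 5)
Your proposal follows the same overall architecture as the paper's proof: use \Cref{thm1} to map the CAR optimizer into noise space, form a constrained version of the \Ourmethod objective at the CAR latent budget $\alpha$, observe that the CAR noise vector is feasible for this constrained program so its optimum is no farther from $\bx$, and then use convexity to recover the constrained solution by some choice of $\lambda$. The one substantive difference is the tool used for that last step: the paper scalarizes a two-objective vector optimization problem and appeals to Pareto optimality of convex multi-objective programs, whereas you invoke Lagrangian strong duality via Slater's condition. These are close cousins, but your formulation with the inequality constraint $d_U(\bu, \bu^{\mathrm{CF}}) \le \alpha$ is actually tighter than the paper's equality constraint $d_U(\bu^{\mathrm{CF}}, \bu) = \alpha$, since the equality version carves out a non-convex sphere and strictly speaking sits outside the convex-program machinery the paper then applies. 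Two points to tighten: (i) the set $\{\bu^{\mathrm{CF}} : h(\bF(\bu^{\mathrm{CF}})) = y^{\mathrm{CF}}\}$ is a polyhedron (an intersection of half-spaces determined by the linear decision boundary), not an affine subspace — it is still convex so nothing downstream breaks, but the wording is inaccurate; and (ii) the closing strict-improvement-by-genericity claim is not part of the theorem as proved in the paper (which establishes only the non-strict inequality in \eqref{equ10}), so unless you intend to formalize the non-degeneracy condition, it is safer to drop it rather than leave it as a measure-zero hand-wave.
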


\begin{proof}
	We start by reformulating \textit{Causal Algorithmic Recourse} \eqref{sol2} into a form analogous to our proposed solution \eqref{opt1}. Using Theorem \ref{thm1}, \textit{Causal Algorithmic Recourse} \eqref{sol2} can be rewritten as:
	\begin{align}
		\begin{split}
			\arg \min_{\bx^{\mathrm{CF}}, \ \cA} \quad & d_X\left(\bx^{\mathrm{CF}}, \bx\right) \\
			\mathrm{s.t.} \quad & h(\bx^{\mathrm{CF}}) = y^{\mathrm{CF}}, \\
			& \bx^{\mathrm{CF}} = \bF\left(\bu_\cA^{\mathrm{ICF}}\right), \\
                & \bu_\cA^{\mathrm{ICF}} =  \bF^{-1}\left(\bF_\cA\left(\bu\right)\right), \\
			& \bx = \bF(\bu).
		\end{split}
		\label{sol5}
	\end{align}

	The main question is whether there exists a \(\lambda\) such that the optimal \(\bx^{\mathrm{CF}}\) in our method \eqref{opt1} coincides with the optimal \(\bx^{\mathrm{CF}}\) in \eqref{sol5}. Suppose the optimal solution to \eqref{sol5} is attained for \(\bu_{\cA^*}^{\mathrm{ICF}}\). Let \(\alpha\) represent the distance between \(\bu_{\cA^*}^{\mathrm{ICF}}\) and \(\bu\):
	\begin{align}
		d_U\left( \bu_{\cA^*}^{\mathrm{ICF}}, \bu \right) = \alpha.
	\end{align}
	
	We now define the following optimization problem:
	\begin{align}
		\begin{split}
			\arg \min_{\bx^{\mathrm{CF}}, \bu^{\mathrm{CF}}} \quad & d_X\left(\bx^{\mathrm{CF}}, \bx\right) \\
			\mathrm{s.t.} \quad & h(\bx^{\mathrm{CF}}) = y^{\mathrm{CF}}, \\
			& \bx^{\mathrm{CF}} = \bF\left(\bu^{\mathrm{CF}}\right), \\
			& d_U\left( \bu^{\mathrm{CF}}, \bu \right) = \alpha, \\
			& \bx = \bF(\bu).
		\end{split}
		\label{sol6}
	\end{align}
	
	Let the optimal solution to \eqref{sol5} be \(\bx^{*\mathrm{ICF}}\), and the optimal solution to \eqref{sol6} be \(\bx^{*\mathrm{BCF}}\). Then, it follows:
	\begin{align}
		d_X\left(\bx^{*\mathrm{BCF}}, \bx\right) \le d_X\left(\bx^{*\mathrm{ICF}}, \bx\right).
		\label{equ10}
	\end{align}
    
	This inequality holds because \(\bu_{\cA^*}^{\mathrm{ICF}}\) satisfies the constraint \(d_U\left( \bu^{\mathrm{CF}}, \bu \right) = \alpha\), while other feasible values of \(\bu^{\mathrm{CF}}\) within the same constraint may reduce the objective \(d_X\left(\bx^{\mathrm{CF}}, \bx\right)\) further. Hence, the \textit{Causal Algorithmic Recourse} formulation \eqref{sol5} may not always yield the closest \(\bx^{\mathrm{CF}}\) to \(\bx\) among all \(\bu^{\mathrm{CF}}\) satisfying the distance constraint \(\alpha\) from \(\bu\).
	
	Next, we examine whether there exists a \(\lambda\) such that the optimal solution of our method \eqref{opt1} aligns with the optimal solution of \eqref{sol6}. In essence, we seek a \(\lambda\) such that the optimal \(\bu^{\mathrm{CF}}\) from \eqref{opt1} satisfies the distance constraint \(d_U\left( \bu^{\mathrm{CF}}, \bu \right) = \alpha\).
	
	To approach this, consider the following vector optimization problem:
	\begin{align}
		\begin{split}
			\arg\min_{\mathbf{x}^{\mathrm{CF}}, \mathbf{u}^{\mathrm{CF}}} \quad & \left(d_X\left(\mathbf{x}, \mathbf{x}^{\mathrm{CF}}\right), \; d_U\left(\mathbf{u}, \mathbf{u}^{\mathrm{CF}}\right)\right) \\
			\mathrm{s.t.} \quad & h(\mathbf{x}^{\mathrm{CF}}) = y^{\mathrm{CF}}, \\
			& \mathbf{x}^{\mathrm{CF}} = \bF(\mathbf{u}^{\mathrm{CF}}), \\
			& \mathbf{x} = \bF(\mathbf{u}).
		\end{split}
		\label{vecopt}
	\end{align}
	
	The optimization \eqref{vecopt} simultaneously minimizes \(d_X\left(\mathbf{x}, \mathbf{x}^{\mathrm{CF}}\right)\) and \(d_U\left(\mathbf{u}, \mathbf{u}^{\mathrm{CF}}\right)\). However, in certain cases, reducing one term may result in an increase in the other.
	
	To resolve this, we utilize the concept of Pareto optimality \cite{boyd2004convex}. A well-known result for convex problems is that scalarizing the objective:
	\begin{align}
		d_X\left(\bx, \bx^{\mathrm{CF}}\right) + \lambda \; d_U\left(\bu, \bu^{\mathrm{CF}}\right)
		\label{scalaropt}
	\end{align}
    
	yields all Pareto-optimal solutions by varying \(\lambda > 0\). Specifically, every optimal solution of the scalarized optimization corresponds to a Pareto-optimal point of the vector optimization. Moreover, since the vector optimization problem \eqref{vecopt} is convex (from the assumptions of the theorem), all Pareto-optimal points can be achieved.
	
	Returning to our optimization, note that the solution to \eqref{sol6} is a Pareto-optimal point of \eqref{vecopt} because with constraint \(d_U\left( \bu^{\mathrm{CF}}, \bu \right) = \alpha\) we minimizes \(d_X\left(\bx^{\mathrm{CF}}, \bx\right)\). Thus, the solution cannot be further improved along the \(d_X\left(\bx^{\mathrm{CF}}, \bx\right)\) axis.

	Thus, by varying \(\lambda\), it is possible to identify a \(\lambda\) such that the solution of our method \Ourmethod \eqref{opt1} matches the solution of \eqref{sol6}, ensuring \(d_U\left( \bu^{\mathrm{CF}}, \bu \right) = \alpha\). Consequently, as demonstrated in \eqref{equ10}, our proposed method yields a \(\bx^{\mathrm{CF}}\) that is closer to \(\bx\) compared to \textit{Causal Algorithmic Recourse}, while preserving the fixed distance \(\alpha\) between the latent variables \(\bu\) and \(\bu^{\mathrm{CF}}\).
\end{proof}
The convexity of the distance functions, along with the linearity of \(\bF(\cdot)\) and \(h(\cdot)\), are assumed primarily to ensure the existence of \(\lambda\) by utilizing the convexity of the vector optimization problem \eqref{vecopt}. However, similar conclusions can still be derived without these assumptions if there exists a suitable \(\lambda\) such that \(d_U\left( \bu^{\mathrm{CF}}, \bu \right) = \alpha\). This indicates that, even in cases where the convexity of \(d_X(\cdot, \bx)\), \(d_U(\cdot, \bu)\), or the linearity of \(\bF(\cdot)\) and \(h(\cdot)\) are not assumed, our proposed method often provides a \(\bx^{\mathrm{CF}}\) that is closer to \(\bx\) than \textit{Causal Algorithmic Recourse}, while preserving the fixed distance \(\alpha\) between the latent variables \(\bu\) and \(\bu^{\mathrm{CF}}\).

Importantly, convexity and linearity assumptions are not necessary for the core insight to remain valid. Even without assuming convexity of the distance functions or linearity of \(\bF(\cdot)\) and \(h(\cdot)\), we can \emph{always} find a solution that outperforms \textit{Causal Algorithmic Recourse} among the Pareto optimal points of the vector optimization problem~\eqref{vecopt}. The assumptions of linearity and convexity are only required to guarantee that this Pareto optimal point can be \textit{captured} by some value of \(\lambda\).

It is worth noting that our method is substantially more efficient computationally than \emph{Causal Algorithmic Recourse}, since that approach relies on a combinatorial optimization procedure. Moreover, our method also surpasses \emph{Backtracking Counterfactual Explanations} in computational efficiency, especially when dealing with a complex distribution \(\Pp_B\).

\section{Metric Selection and Optimization Approach}
\label{sec:metric}
To solve the optimization problem in Eq.~\eqref{opt1}, it is essential to define the distance metrics $d_X\left(\mathbf{x}, \mathbf{x}^{\mathrm{CF}}\right)$ and $d_U\left(\mathbf{u}, \mathbf{u}^{\mathrm{CF}}\right)$. For $d_X(., .)$, which measures proximity in the observed space, the $\ell_1$ norm is a natural choice as it minimizes the number of modified features, making the counterfactuals more interpretable and actionable:
\begin{align}
	d_X\left(\mathbf{x}, \mathbf{x}^{\mathrm{CF}}\right) = \left\| \mathbf{x} - \mathbf{x}^{\mathrm{CF}} \right\|_1.
\end{align}

For the latent space, $d_U(., .)$ evaluates how plausible a counterfactual is relative to the original latent representation. The $\ell_2$ norm ensures smoothness and proximity:
\begin{align}
	d_U\left(\mathbf{u}, \mathbf{u}^{\mathrm{CF}}\right) = \left\| \mathbf{u} - \mathbf{u}^{\mathrm{CF}} \right\|_2.
\end{align}

By combining these metrics, the optimization problem can be reformulated in a meaningful way.

Solving the optimization problem in Eq.~\eqref{opt1}, particularly for complex models such as neural networks \cite{katz2017reluplex} or additive tree models \cite{ates2021counterfactual}, is generally \textit{NP}-hard. Gradient-based methods are effective when both the objective and the constraints are differentiable. For example, the constraints can be integrated into the objective function as penalty terms:
\begin{equation}
	\begin{split}
\arg\min_{\mathbf{x}^{\mathrm{CF}}} \;\; & d_X(\mathbf{x}, \mathbf{x}^{\mathrm{CF}}) + \lambda \; d_U(\bF^{-1}(\mathbf{x}), \bF^{-1}(\mathbf{x}^{\mathrm{CF}})) \\
		& + \beta \operatorname{Loss}\left(h(\mathbf{x}^{\mathrm{CF}}), y^{\mathrm{CF}}\right),
	\end{split}
	\label{optapprox}
\end{equation}

where $\operatorname{Loss}(.)$ is a common classification loss, such as cross-entropy. To approximate solutions in practice, $\beta$ is gradually increased until the counterfactual $\mathbf{x}^{\mathrm{CF}}$ satisfies the desired output class $y^{\mathrm{CF}}$ \cite{szegedy2013intriguing}.

Heuristic approaches also provide practical alternatives; for instance, shortest path searches in empirical graphs \cite{poyiadzi2020face} or expanding-sphere searches \cite{laugel2017inverse} offer approximate solutions in specific scenarios. 
\section{Experimental Evaluation}
\label{sec:experiments}

\subsection{Simulation Setup}
To evaluate the proposed method, we adopt the experiment in the \textit{Causal Algorithmic Recourse} paper \cite{karimi2021algorithmic}, as it serves as a critical baseline for comparison. Since the primary focus is on assessing the interpretability of the proposed method, we use a simple model for $h(\cdot)$. This ensures that the exact solutions to the optimization problem can be computed and aligned with our intuitive understanding of the task.
\begin{figure}[!t]
\centering
\begin{tikzpicture}[>=stealth, 
                    node distance=2cm,
                    every node/.style={font=\large}]
\tikzstyle{obs}=[circle, draw=black, fill=gray!30, minimum size=9mm, inner sep=0pt]

\node[obs] (X2) at (-2, 1.2)   {$X_2$};
\node[obs] (X1) at (0, 1.2)    {$X_1$};
\node[obs] (X3) at (-1, 0)     {$X_3$};
\node[obs] (X4) at (-1, -1.4)  {$X_4$};
\node[obs] (Yhat) at (1, 0)    {$\hat{Y}$};

\draw[->] (X1) -- (X3);
\draw[->] (X2) -- (Yhat);
\draw[->] (X1) -- (Yhat);
\draw[->] (X2) -- (X3);
\draw[->] (X3) -- (X4);
\draw[->] (X3) -- (Yhat);
\draw[->] (X4) -- (Yhat);
\end{tikzpicture}
\caption{Causal graph of the bank’s high-risk detection model. 
\(X_1\) is gender, \(X_2\) is age, \(X_3\) is loan amount, and \(X_4\) is repayment duration in months. 
The model’s output \(\hat{Y}\) indicates high or low risk for loan approval.}

\label{bank}
\end{figure}
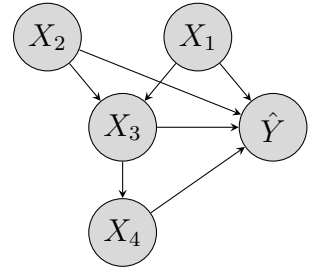

\begin{table*}[!t]
\caption{Counterfactual solutions from different methods for an individual originally classified as high-risk \(\bigl(\bx = (\text{female}, 24, \$4308, 48)\bigr)\). Each method modifies the features to flip the prediction to low-risk.}
\label{tab:cf_comparison}
\vskip 0.15in 
\centering
\begin{tabular}{lcccc}
\toprule
\textbf{Method} & Gender &  Age & Loan amount & Duration\\
\midrule
Original (High-risk) & female & 24 & \$4308 & 48 \\
\midrule
\Ourmethod (\textbf{Our Method}, $\lambda = 1$) & female & 24 & \$4087 & 33.0 \\
\Ourmethod (\textbf{Our Method}, $\lambda = 1.2$) & female & 24 & \$3736 & 33.3 \\
Counterfactual Explanations \cite{wachter2017counterfactual} & female & 24 & \$4308 & 32.8 \\
Causal Algorithmic Recourse \cite{karimi2021algorithmic} & female & 24 & \$4308 & 32.8 \\
Deep Backtracking Explanations \cite{kladny2024deep} & female & 27.2 & \$2727 & 35.7 \\
\bottomrule
\end{tabular}
\end{table*}
We consider a model $h(\cdot)$ designed to classify individuals as high- or low-risk for loan approval. 
% The input features are as follows: $X_1$ (gender), $X_2$ (age), $X_3$ (loan amount), and $X_4$ (loan repayment duration in months). 
The input vector $\bX$ is assumed to follow the given causal structure:
\begin{align}
\begin{split}
	& X_1 := U_1, \\
	& X_2 := U_2, \\
	& X_3 := f_3(X_1, X_2) + U_3, \\
	& X_4 := f_4(X_3) + U_4,
\end{split}
\end{align}

where the system's output is given by $\hat{Y} = h(X_1, X_2, X_3, X_4)$. \Cref{bank} illustrates the causal graph associated with the problem.

For this simulation, the causal graph is assumed to be known, while the functions $f_3(\cdot)$, $f_4(\cdot)$, and $h(\cdot)$ are estimated using real-world data from the \textit{German Credit Dataset} \cite{statlog_(german_credit_data)_144}. We assume $f_3(\cdot)$ and $f_4(\cdot)$ are linear functions and $h(\cdot)$ is logistic regression. Following \cite{peters2017elements}, the coefficients of the causal model can be derived using linear regression when the causal functions are linear.

% To determine $f_3(\cdot)$, we set $X_1$ and $X_2$ as inputs and $X_3$ as the output for regression. Similarly, $X_3$ serves as the input and $X_4$ as the output to estimate $f_4(\cdot)$. The feature $X_1$ (gender) is one-hot encoded during the logistic regression for $h(\cdot)$ to improve the learning process. 

The feature \(X_1\) (gender) is one-hot encoded during logistic regression for \(h(\cdot)\) and is kept fixed when generating counterfactuals. Since \(X_1\) is categorical, modifying it is avoided, as changing gender does not provide actionable insights. Additionally, all features are normalized by their standard deviations to improve performance.

\subsection{Optimization Problem and Results}

We consider an individual with features $\bx = (\text{female}, 24, \$4308, 48)$ classified as high-risk by the model $h(\cdot)$. Using the causal model and the learned functions $f_3(\cdot)$ and $f_4(\cdot)$, we derive the latent representation $\bu$. The following optimization problem is then formulated:
\begin{align}
	\begin{split}
	\mathop{\arg \min}\limits_{\mathbf{x}^{\mathrm{CF}}, \mathbf{u}^{\mathrm{CF}}} \; & \sum_{i=2}^4 \frac{\left| x_i - x_i^{\mathrm{CF}} \right|}{\sigma_i} + \lambda \; \sqrt{\sum_{i=2}^4 \frac{\left( u_i - u_i^{\mathrm{CF}} \right)^2}{\sigma_i^2}} \\
		\mathrm{s.t.} \quad & h(\mathbf{x}^{\mathrm{CF}}) = \text{low-risk}, \\
		& x_2^{\mathrm{CF}} = u_2^{\mathrm{CF}}, \\
		& x_3^{\mathrm{CF}} = f_3(x_1^{\mathrm{CF}}, x_2^{\mathrm{CF}}) + u_3^{\mathrm{CF}}, \\
		& x_4^{\mathrm{CF}} = f_4(x_3^{\mathrm{CF}}) + u_4^{\mathrm{CF}}.
	\end{split}
	\label{expbank}
\end{align}

We solve the optimization problem in \eqref{expbank} with $\lambda = 1$ and $\lambda = 1.2$. Table~\ref{tab:cf_comparison} reports $\bx^{\mathrm{CF}}$ from our method and other prominent approaches for comparison.

As shown in Table~\ref{tab:cf_comparison}, both Counterfactual Explanations and Causal Algorithmic Recourse focus solely on reducing the repayment duration, which is not actionable for the user and fails to provide meaningful guidance for future improvements. In contrast, the Deep Backtracking Explanations alters all features, leading to a significant departure from the original observation and reducing local interpretability. Our approach finds a balance by adjusting both the loan amount and repayment duration while maintaining sparsity and interpretability, offering a more intuitive and actionable explanation for the user. 

When the user’s initial features are given by $x = (\text{female}, 24, \$4308, 48)$, this suggests that a 48-month loan repayment is appropriate for the user. Therefore, if we adjust this feature vector solely by reducing the repayment duration, the repayment becomes significantly more challenging for the user, thereby making the explanation less actionable.

\begin{table*}[!t]
\caption{Counterfactual solutions from different methods for an individual originally classified as high-risk.}
\label{tab:cf_comparison2}
\vskip 0.15in 
\centering
\begin{tabular}{lcccc}
\toprule
\textbf{Method} & Gender &  Age & Loan amount & Duration\\
\midrule
Original (High-risk) & male & 27 & \$14027 & 60 \\
\midrule
\Ourmethod (\textbf{Our Method}, $\lambda = 1$) & male & 27 & \$13686 & 36.9 \\
\Ourmethod (\textbf{Our Method}, $\lambda = 1.2$) & male & 27 & \$13149 & 37.4 \\
Counterfactual Explanations \cite{wachter2017counterfactual} & male & 27 & \$14027 & 36.6 \\
Causal Algorithmic Recourse \cite{karimi2021algorithmic} & male & 27 & \$14027 & 36.6 \\
Deep Backtracking Explanations \cite{kladny2024deep} & male & 31.9 & \$11599 &  41.1 \\
\bottomrule
\end{tabular}
\end{table*}

To put it quantitatively, repaying $\$4308$ over 48 months corresponds to a monthly payment of $\$89.75$. Any explanation that deviates considerably from this monthly rate is less actionable. Counterfactual Explanations and Causal Algorithmic Recourse yield a monthly repayment of about $\$131.3$ (i.e., $\$4308$ divided by 32.8 months). In contrast, our solution results in a monthly repayment of \$123.8 when $\lambda = 1$ (i.e., $\$4087$ divided by 33 months) and $\$112.2$ when $\lambda = 1.2$ (i.e., $\$3736$ divided by 33.3 months). Thus, while Counterfactual Explanations and Causal Algorithmic Recourse increase the monthly repayment by $46.3\%$, our approach leads to increases of $37.8\%$ and $25.0\%$ for $\lambda = 1$ and $\lambda = 1.2$, respectively, making our explanations more actionable for the user.

Table~\ref{tab:cf_comparison2} provides the counterfactual outcomes for another example with initial features $x = (\text{male}, 27, \$14027, 60)$, where similar trends across methods are observed.

\subsection{Sensitivity Analysis}  
Estimating causal functions in the input graph often involves approximations, introducing potential noise. To evaluate the robustness of our method, we add zero-mean Gaussian noise with a standard deviation of 5 to the coefficients of \( f_3(\cdot) \) and solve the optimization problem (\ref{expbank}) with \( \lambda = 1.2 \) for an individual with features $\bx = (\text{female}, 24, \$4308, 48)$. The resulting counterfactuals are:
\begin{align}
\begin{split}
	& \bx^{\mathrm{CF}} = (\text{female}, 24, \$3839, 33.2), \\
	& \bx^{\mathrm{CF}} = (\text{female}, 24, \$3572, 33.5), \\
	& \bx^{\mathrm{CF}} = (\text{female}, 24, \$3618, 33.4).
\end{split}
\end{align}

Despite significant noise, the results remain stable, with age unchanged and explanations staying sparse. This highlights the robustness of our method, showing that even approximate causal functions produce reliable and interpretable counterfactuals.

\section{Conclusion}
\label{sec:conclusion}

In this work, we presented a new framework \Ourmethod for counterfactual explanations based on backtracking counterfactuals. Our approach overcomes the limitations of interventional counterfactuals by introducing an optimization problem that generates actionable and causally consistent explanations.
By solving a single unified objective parameterized by \(\lambda\), \Ourmethod recovers four established paradigms—classical \textit{Counterfactual Explanations} (\(\lambda=0\)), \textit{Deep Backtracking Explanations} (\(\lambda\to\infty\)), \textit{Backtracking Counterfactual Explanations} via a specific backtracking conditional distribution, and \textit{Causal Algorithmic Recourse} under a convexity assumption. Additionally, we demonstrated that our method is both easier to understand and more computationally efficient compared to causal algorithmic recourse. Through simulation experiments, we verified that the proposed method produces explanations that are more intuitive for users and more practical for real-world applications.

\section{Future Work}
\label{sec:future_work}
This work opens several directions for further research:

\textit{Relaxing Assumptions for Connection with Causal Algorithmic Recourse}: Our approach depends on convexity assumptions to establish a connection with causal algorithmic recourse. Future work could investigate alternative conditions that do not require these assumptions, allowing the theory to be applied to non-linear and non-convex models frequently found in real-world applications.

\textit{Testing on Complex Models}: This study focused on simpler models for \(h(.)\) to ensure  intuitive understanding of the task. A valuable next step is to test our method on more complex models, such as deep neural networks, and compare its performance with existing state-of-the-art methods. This would help demonstrate the method’s effectiveness in handling challenging real-world scenarios.

\textit{Improving Backtracking Counterfactual Definitions}: The current definition of backtracking counterfactuals does not ensure that the noise variables \(\bU^{\mathrm{CF}}\) in the counterfactual world remain mutually independent. In  SCMs, this independence is important for maintaining the causal interpretation of the noise variables. Extending the definition to enforce this independence would enhance both the \textbf{theoretical consistency} and \textbf{practical usefulness} of backtracking counterfactuals, making them more aligned with core principles of causal reasoning.

\section*{Acknowledgements}

This research was conducted while Pouria Fatemi and Ehsan Sharifian were affiliated with Sharif University of Technology.

\section*{Impact Statement}
Our work advances the field of machine learning by proposing a framework for counterfactual explanations that enhances interpretability while ensuring causal consistency and actionability. This method unifies multiple existing approaches, including counterfactual explanations, deep backtracking explanations, causal algorithmic recourse, and backtracking counterfactual explanations, while improving computational efficiency.

The potential impact of our work is most relevant to high-stakes applications such as finance and healthcare, where reliable and transparent decision-making is essential. By incorporating causal reasoning into counterfactual explanations, our approach contributes to making AI-driven decisions more interpretable and aligned with real-world constraints. While our method does not introduce direct ethical concerns, its application in automated decision-making should be carefully evaluated to ensure fair and responsible use.

\bibliography{ref}

\begin{thebibliography}{39}
\providecommand{\natexlab}[1]{#1}
\providecommand{\url}[1]{\texttt{#1}}
\expandafter\ifx\csname urlstyle\endcsname\relax
  \providecommand{\doi}[1]{doi: #1}\else
  \providecommand{\doi}{doi: \begingroup \urlstyle{rm}\Url}\fi

\bibitem[Ates et~al.(2021)Ates, Aksar, Leung, and Coskun]{ates2021counterfactual}
Ates, E., Aksar, B., Leung, V.~J., and Coskun, A.~K.
\newblock Counterfactual explanations for multivariate time series.
\newblock In \emph{2021 International Conference on Applied Artificial Intelligence (ICAPAI)}, pp.\  1--8. IEEE, 2021.

\bibitem[Bhoi et~al.(2024)Bhoi, Lee, Hsu, and Tan]{bhoi2024refine}
Bhoi, S., Lee, M.~L., Hsu, W., and Tan, N.~C.
\newblock Refine: a fine-grained medication recommendation system using deep learning and personalized drug interaction modeling.
\newblock \emph{Advances in Neural Information Processing Systems}, 36, 2024.

\bibitem[Boyd \& Vandenberghe(2004)Boyd and Vandenberghe]{boyd2004convex}
Boyd, S.~P. and Vandenberghe, L.
\newblock \emph{Convex optimization}.
\newblock Cambridge university press, 2004.

\bibitem[Dhurandhar et~al.(2018)Dhurandhar, Chen, Luss, Tu, Ting, Shanmugam, and Das]{dhurandhar2018explanations}
Dhurandhar, A., Chen, P.-Y., Luss, R., Tu, C.-C., Ting, P., Shanmugam, K., and Das, P.
\newblock Explanations based on the missing: Towards contrastive explanations with pertinent negatives.
\newblock \emph{Advances in neural information processing systems}, 31, 2018.

\bibitem[Dominguez-Olmedo et~al.(2022)Dominguez-Olmedo, Karimi, and Sch{\"o}lkopf]{dominguez2022adversarial}
Dominguez-Olmedo, R., Karimi, A.~H., and Sch{\"o}lkopf, B.
\newblock On the adversarial robustness of causal algorithmic recourse.
\newblock In \emph{International Conference on Machine Learning}, pp.\  5324--5342. PMLR, 2022.

\bibitem[Dorr(2016)]{dorr2016against}
Dorr, C.
\newblock Against counterfactual miracles.
\newblock \emph{The Philosophical Review}, 125\penalty0 (2):\penalty0 241--286, 2016.

\bibitem[Doshi-Velez \& Kim(2017)Doshi-Velez and Kim]{doshi2017towards}
Doshi-Velez, F. and Kim, B.
\newblock Towards a rigorous science of interpretable machine learning.
\newblock \emph{arXiv preprint arXiv:1702.08608}, 2017.

\bibitem[Frye et~al.(2020)Frye, Rowat, and Feige]{frye2020asymmetric}
Frye, C., Rowat, C., and Feige, I.
\newblock Asymmetric shapley values: incorporating causal knowledge into model-agnostic explainability.
\newblock \emph{Advances in Neural Information Processing Systems}, 33:\penalty0 1229--1239, 2020.

\bibitem[Heskes et~al.(2020)Heskes, Sijben, Bucur, and Claassen]{heskes2020causal}
Heskes, T., Sijben, E., Bucur, I.~G., and Claassen, T.
\newblock Causal shapley values: Exploiting causal knowledge to explain individual predictions of complex models.
\newblock \emph{Advances in neural information processing systems}, 33:\penalty0 4778--4789, 2020.

\bibitem[Hofmann(1994)]{statlog_(german_credit_data)_144}
Hofmann, H.
\newblock {Statlog (German Credit Data)}.
\newblock UCI Machine Learning Repository, 1994.
\newblock {DOI}: https://doi.org/10.24432/C5NC77.

\bibitem[Janzing et~al.(2020)Janzing, Minorics, and Bl{\"o}baum]{janzing2020feature}
Janzing, D., Minorics, L., and Bl{\"o}baum, P.
\newblock Feature relevance quantification in explainable ai: A causal problem.
\newblock In \emph{International Conference on artificial intelligence and statistics}, pp.\  2907--2916. PMLR, 2020.

\bibitem[Jethani et~al.(2021)Jethani, Sudarshan, Aphinyanaphongs, and Ranganath]{jethani2021have}
Jethani, N., Sudarshan, M., Aphinyanaphongs, Y., and Ranganath, R.
\newblock Have we learned to explain?: How interpretability methods can learn to encode predictions in their interpretations.
\newblock In \emph{International Conference on Artificial Intelligence and Statistics}, pp.\  1459--1467. PMLR, 2021.

\bibitem[Jung et~al.(2022)Jung, Kasiviswanathan, Tian, Janzing, Bl{\"o}baum, and Bareinboim]{jung2022measuring}
Jung, Y., Kasiviswanathan, S., Tian, J., Janzing, D., Bl{\"o}baum, P., and Bareinboim, E.
\newblock On measuring causal contributions via do-interventions.
\newblock In \emph{International Conference on Machine Learning}, pp.\  10476--10501. PMLR, 2022.

\bibitem[Karimi et~al.(2020{\natexlab{a}})Karimi, Barthe, Balle, and Valera]{karimi2020model}
Karimi, A.-H., Barthe, G., Balle, B., and Valera, I.
\newblock Model-agnostic counterfactual explanations for consequential decisions.
\newblock In \emph{International conference on artificial intelligence and statistics}, pp.\  895--905. PMLR, 2020{\natexlab{a}}.

\bibitem[Karimi et~al.(2020{\natexlab{b}})Karimi, Von~K{\"u}gelgen, Sch{\"o}lkopf, and Valera]{karimi2020algorithmic}
Karimi, A.-H., Von~K{\"u}gelgen, J., Sch{\"o}lkopf, B., and Valera, I.
\newblock Algorithmic recourse under imperfect causal knowledge: a probabilistic approach.
\newblock \emph{Advances in neural information processing systems}, 33:\penalty0 265--277, 2020{\natexlab{b}}.

\bibitem[Karimi et~al.(2021)Karimi, Sch{\"o}lkopf, and Valera]{karimi2021algorithmic}
Karimi, A.-H., Sch{\"o}lkopf, B., and Valera, I.
\newblock Algorithmic recourse: from counterfactual explanations to interventions.
\newblock In \emph{Proceedings of the 2021 ACM conference on fairness, accountability, and transparency}, pp.\  353--362, 2021.

\bibitem[Karimi et~al.(2022)Karimi, Barthe, Sch{\"o}lkopf, and Valera]{karimi2022survey}
Karimi, A.-H., Barthe, G., Sch{\"o}lkopf, B., and Valera, I.
\newblock A survey of algorithmic recourse: contrastive explanations and consequential recommendations.
\newblock \emph{ACM Computing Surveys}, 55\penalty0 (5):\penalty0 1--29, 2022.

\bibitem[Karimi et~al.(2023)Karimi, Muandet, Kornblith, Sch{\"o}lkopf, and Kim]{karimi2023on}
Karimi, A.-H., Muandet, K., Kornblith, S., Sch{\"o}lkopf, B., and Kim, B.
\newblock On the relationship between explanation and prediction: A causal view.
\newblock In \emph{XAI in Action: Past, Present, and Future Applications}, 2023.
\newblock URL \url{https://openreview.net/forum?id=ag1CpSUjPS}.

\bibitem[Katz et~al.(2017)Katz, Barrett, Dill, Julian, and Kochenderfer]{katz2017reluplex}
Katz, G., Barrett, C., Dill, D.~L., Julian, K., and Kochenderfer, M.~J.
\newblock Reluplex: An efficient smt solver for verifying deep neural networks.
\newblock In \emph{Computer Aided Verification: 29th International Conference, CAV 2017, Heidelberg, Germany, July 24-28, 2017, Proceedings, Part I 30}, pp.\  97--117. Springer, 2017.

\bibitem[Kim et~al.(2016)Kim, Khanna, and Koyejo]{kim2016examples}
Kim, B., Khanna, R., and Koyejo, O.~O.
\newblock Examples are not enough, learn to criticize! criticism for interpretability.
\newblock \emph{Advances in neural information processing systems}, 29, 2016.

\bibitem[Kladny et~al.(2024)Kladny, von K{\"u}gelgen, Sch{\"o}lkopf, and Muehlebach]{kladny2024deep}
Kladny, K.-R., von K{\"u}gelgen, J., Sch{\"o}lkopf, B., and Muehlebach, M.
\newblock Deep backtracking counterfactuals for causally compliant explanations.
\newblock \emph{Transactions on Machine Learning Research}, 2024.

\bibitem[Laugel et~al.(2017)Laugel, Lesot, Marsala, Renard, and Detyniecki]{laugel2017inverse}
Laugel, T., Lesot, M.-J., Marsala, C., Renard, X., and Detyniecki, M.
\newblock Inverse classification for comparison-based interpretability in machine learning.
\newblock \emph{arXiv preprint arXiv:1712.08443}, 2017.

\bibitem[Lundberg \& Lee(2017)Lundberg and Lee]{lundberg2017unified}
Lundberg, S.~M. and Lee, S.-I.
\newblock A unified approach to interpreting model predictions.
\newblock \emph{Advances in neural information processing systems}, 30, 2017.

\bibitem[Miller(2019)]{miller2019explanation}
Miller, T.
\newblock Explanation in artificial intelligence: Insights from the social sciences.
\newblock \emph{Artificial intelligence}, 267:\penalty0 1--38, 2019.

\bibitem[Molnar(2020)]{molnar2020interpretable}
Molnar, C.
\newblock \emph{Interpretable machine learning}.
\newblock Lulu. com, 2020.

\bibitem[Nasr-Esfahany et~al.(2023)Nasr-Esfahany, Alizadeh, and Shah]{nasr2023counterfactual}
Nasr-Esfahany, A., Alizadeh, M., and Shah, D.
\newblock Counterfactual identifiability of bijective causal models.
\newblock In \emph{International Conference on Machine Learning}, pp.\  25733--25754. PMLR, 2023.

\bibitem[Pearl(2009)]{pearl2009causality}
Pearl, J.
\newblock \emph{Causality}.
\newblock Cambridge university press, 2009.

\bibitem[Peters et~al.(2017)Peters, Janzing, and Sch{\"o}lkopf]{peters2017elements}
Peters, J., Janzing, D., and Sch{\"o}lkopf, B.
\newblock \emph{Elements of causal inference: foundations and learning algorithms}.
\newblock MIT press, 2017.

\bibitem[Poyiadzi et~al.(2020)Poyiadzi, Sokol, Santos-Rodriguez, De~Bie, and Flach]{poyiadzi2020face}
Poyiadzi, R., Sokol, K., Santos-Rodriguez, R., De~Bie, T., and Flach, P.
\newblock Face: feasible and actionable counterfactual explanations.
\newblock In \emph{Proceedings of the AAAI/ACM Conference on AI, Ethics, and Society}, pp.\  344--350, 2020.

\bibitem[Ribeiro et~al.(2016)Ribeiro, Singh, and Guestrin]{ribeiro2016should}
Ribeiro, M.~T., Singh, S., and Guestrin, C.
\newblock " why should i trust you?" explaining the predictions of any classifier.
\newblock In \emph{Proceedings of the 22nd ACM SIGKDD international conference on knowledge discovery and data mining}, pp.\  1135--1144, 2016.

\bibitem[Sancaktar et~al.(2022)Sancaktar, Blaes, and Martius]{sancaktar2022curious}
Sancaktar, C., Blaes, S., and Martius, G.
\newblock Curious exploration via structured world models yields zero-shot object manipulation.
\newblock \emph{Advances in Neural Information Processing Systems}, 35:\penalty0 24170--24183, 2022.

\bibitem[Slack et~al.(2021)Slack, Hilgard, Lakkaraju, and Singh]{slack2021counterfactual}
Slack, D., Hilgard, A., Lakkaraju, H., and Singh, S.
\newblock Counterfactual explanations can be manipulated.
\newblock \emph{Advances in neural information processing systems}, 34:\penalty0 62--75, 2021.

\bibitem[Szegedy et~al.(2013)Szegedy, Zaremba, Sutskever, Bruna, Erhan, Goodfellow, and Fergus]{szegedy2013intriguing}
Szegedy, C., Zaremba, W., Sutskever, I., Bruna, J., Erhan, D., Goodfellow, I., and Fergus, R.
\newblock Intriguing properties of neural networks.
\newblock \emph{arXiv preprint arXiv:1312.6199}, 2013.

\bibitem[Von~K{\"u}gelgen et~al.(2022)Von~K{\"u}gelgen, Karimi, Bhatt, Valera, Weller, and Sch{\"o}lkopf]{von2022fairness}
Von~K{\"u}gelgen, J., Karimi, A.-H., Bhatt, U., Valera, I., Weller, A., and Sch{\"o}lkopf, B.
\newblock On the fairness of causal algorithmic recourse.
\newblock In \emph{Proceedings of the AAAI conference on artificial intelligence}, volume~36, pp.\  9584--9594, 2022.

\bibitem[Von~K{\"u}gelgen et~al.(2023)Von~K{\"u}gelgen, Mohamed, and Beckers]{von2023backtracking}
Von~K{\"u}gelgen, J., Mohamed, A., and Beckers, S.
\newblock Backtracking counterfactuals.
\newblock In \emph{Conference on Causal Learning and Reasoning}, pp.\  177--196. PMLR, 2023.

\bibitem[Wachter et~al.(2017)Wachter, Mittelstadt, and Russell]{wachter2017counterfactual}
Wachter, S., Mittelstadt, B., and Russell, C.
\newblock Counterfactual explanations without opening the black box: Automated decisions and the gdpr.
\newblock \emph{Harv. JL \& Tech.}, 31:\penalty0 841, 2017.

\bibitem[Wang et~al.(2021)Wang, Wiens, and Lundberg]{wang2021shapley}
Wang, J., Wiens, J., and Lundberg, S.
\newblock Shapley flow: A graph-based approach to interpreting model predictions.
\newblock In \emph{International Conference on Artificial Intelligence and Statistics}, pp.\  721--729. PMLR, 2021.

\bibitem[Xie et~al.(2024)Xie, Han, Zhang, Lai, Peng, Lopez-Lira, and Huang]{xie2024pixiu}
Xie, Q., Han, W., Zhang, X., Lai, Y., Peng, M., Lopez-Lira, A., and Huang, J.
\newblock Pixiu: A comprehensive benchmark, instruction dataset and large language model for finance.
\newblock \emph{Advances in Neural Information Processing Systems}, 36, 2024.

\bibitem[Zhang \& Bareinboim(2018)Zhang and Bareinboim]{zhang2018fairness}
Zhang, J. and Bareinboim, E.
\newblock Fairness in decision-making—the causal explanation formula.
\newblock In \emph{Proceedings of the AAAI Conference on Artificial Intelligence}, volume~32, 2018.

\end{thebibliography}
\bibliographystyle{icml2025}

%%%%%%%%%%%%%%%%%%%%%%%%%%%%%%%%%%%%%%%%%%%%%%%%%%%%%%%%%%%%%%%%%%%%%%%%%%%%%%%
%%%%%%%%%%%%%%%%%%%%%%%%%%%%%%%%%%%%%%%%%%%%%%%%%%%%%%%%%%%%%%%%%%%%%%%%%%%%%%%
% APPENDIX
%%%%%%%%%%%%%%%%%%%%%%%%%%%%%%%%%%%%%%%%%%%%%%%%%%%%%%%%%%%%%%%%%%%%%%%%%%%%%%%
%%%%%%%%%%%%%%%%%%%%%%%%%%%%%%%%%%%%%%%%%%%%%%%%%%%%%%%%%%%%%%%%%%%%%%%%%%%%%%%
\newpage
\appendix
\onecolumn
\section{Formal Definition of Interventional and Backtracking Counterfactuals}
\label{app:formal_def}

To formally understand the differences and computation processes underlying interventional and backtracking counterfactuals, we outline their respective definitions and procedural steps below.
\subsection{Interventional Counterfactuals}
\begin{enumerate}
	\item 
	\textbf{Abduction}: Update the distribution of the noise variables \(\mathbf{U}\) in the causal model from \(P_\mathbf{U}\) to the posterior distribution \(P_{\mathbf{U} \mid \mathbf{X} = \mathbf{x}}\), using the observed factual data \(\mathbf{x}\).
	
	\item 
	\textbf{Action}: Perform a hard intervention \(do(X_i := x^{\mathrm{CF}}_i)\) for \(i \in \cA\), modifying the structural equations of the causal model. Denote the modified structural equations as \(\mathbf{S}^\mathrm{CF}\), while retaining the original equations \(f^{\mathrm{CF}}_i = f_i\) for \(i \notin \cA\).
	
	\item 
	\textbf{Prediction}: Using the updated causal model \(\mathcal{C}^\mathrm{CF} = (\mathbf{S}^\mathrm{CF}, P_{\mathbf{U} \mid \mathbf{X} = \mathbf{x}})\), compute the distribution over the desired counterfactual outcomes \(\bY^{\mathrm{CF}}\).
\end{enumerate}

\subsection{Backtracking Counterfactuals}

\begin{enumerate}
	\item
	\textbf{Cross-World Abduction}: Update the joint distribution \(\Pp_B(\bU^{\mathrm{CF}}, \bU) = \Pp(\bU) \Pp_B(\bU^{\mathrm{CF}} \mid \bU)\) using the variables \((\bx_\cA^{\mathrm{CF}}, \bx)\) to obtain the posterior distribution \(\Pp_B(\bU^{\mathrm{CF}}, \bU \mid \bx_\cA^{\mathrm{CF}}, \bx)\):
	\begin{align}
		\Pp_B(\bu^{\mathrm{CF}}, \bu \mid \bx_\cA^{\mathrm{CF}}, \bx) 
		= \frac{\Pp_B(\bu^{\mathrm{CF}}, \bu) \operatorname{1}\{\bF_\cA(\bu^{\mathrm{CF}}) = \bx_\cA^{\mathrm{CF}}\} \operatorname{1}\{\bF(\bu) = \bx\}}
		{\Pp_B(\bx_\cA^{\mathrm{CF}}, \bx)}.
        \label{BCex}
	\end{align}
    Where 
    \begin{align}
    \Pp_B(\bx_\cA^{\mathrm{CF}}, \bx) = \int \Pp_B(\bu^{\mathrm{CF}}, \bu) \operatorname{1}\{\bF_\cA(\bu^{\mathrm{CF}}) = \bx_\cA^{\mathrm{CF}}\} \operatorname{1}\{\bF(\bu) = \bx\} \; d\bu \; d\bu^\mathrm{CF}.
    \end{align}
    Calculating 
$\Pp_B(\bx_\cA^{\mathrm{CF}}, \bx)$  becomes computationally challenging for complex $\Pp_B(\bU^{\mathrm{CF}}, \bU)$ distributions, as we must integrate over all values of this distribution.
	\item
	\textbf{Marginalization}: Marginalize over \(\bU\) to compute the posterior distribution \(\Pp_B(\bU^{\mathrm{CF}} \mid \bx_\cA^{\mathrm{CF}}, \bx)\):
	\begin{align}
		\Pp_B(\bu^{\mathrm{CF}} \mid \bx_\cA^{\mathrm{CF}}, \bx) = \int \Pp_B(\bu^{\mathrm{CF}}, \bu \mid \bx_\cA^{\mathrm{CF}}, \bx) \; d\bu.
	\end{align}
	
	\item 
	\textbf{Prediction}: Using the updated causal graph with noise distribution \(\Pp_B(\bU^{\mathrm{CF}} \mid \bx_\cA^{\mathrm{CF}}, \bx)\), compute the probability of the desired counterfactual event:
	\begin{align}
		\Pp_B(\by^{\mathrm{CF}} \mid \bx_\cA^{\mathrm{CF}}, \bx) 
		= \int \Pp_B(\bu^{\mathrm{CF}} \mid \bx_\cA^{\mathrm{CF}}, \bx) \operatorname{1}\{\bF(\bu^{\mathrm{CF}}) = \by^{\mathrm{CF}}\}\; d\bu^{\mathrm{CF}}.
	\end{align}
\end{enumerate}

\section{Relation of Our Solution to Backtracking Counterfactual Explanations}
\label{app:BC}
We aim to demonstrate that our solution \eqref{opt1} can be connected to the backtracking counterfactual explanations framework presented in \cite{von2023backtracking}, which is formulated as the optimization problem \eqref{sol3}, by considering a specific choice of \(\Pp_B(\bU^{\mathrm{CF}} \mid \bU)\). This connection is established by following the three steps of backtracking counterfactual computation:

\begin{enumerate}
	\item \textbf{Cross-World Abduction}: Compute the posterior distribution of the latent variables in the causal graph. Given that the function \(\bF(.)\) is invertible, we have:
	\begin{align}
		\Pp_B\left(\bu^{\mathrm{CF}}, \bu \mid y^{\mathrm{CF}}, \bx\right) 
		&= \Pp_B\left(\bu^{\mathrm{CF}} \mid \bu, y^{\mathrm{CF}}, \bx\right) \Pp_B\left(\bu \mid y^{\mathrm{CF}}, \bx\right) \\
		&= \Pp_B\left(\bu^{\mathrm{CF}} \mid \bu, y^{\mathrm{CF}}\right) \Pp_B(\bu \mid \bx) \\
		&= \Pp_B\left(\bu^{\mathrm{CF}} \mid \bu, y^{\mathrm{CF}}\right) \operatorname{1}\left\{ \bF^{-1}\left(\bx\right) = \bu \right\}.
	\end{align}
	
	\item \textbf{Marginalization}: Compute the marginal posterior distribution over \(\bu^{\mathrm{CF}}\). Since the entire probability mass is concentrated at the point \(\bu = \bF^{-1}\left(\bx\right)\), we have:
	\begin{align}
		\Pp_B\left(\bu^{\mathrm{CF}} \mid y^{\mathrm{CF}}, \bx\right) 
		&= \Pp_B\left(\bu^{\mathrm{CF}}, \bu = \bF^{-1}\left(\bx\right) \mid y^{\mathrm{CF}}, \bx\right) \\
		&= \Pp_B\left(\bu^{\mathrm{CF}} \mid \bF^{-1}\left(\bx\right), y^{\mathrm{CF}}\right).
	\end{align}
	
	\item  \textbf{Prediction}: Compute the distribution \(\bX^{\mathrm{CF}} \mid y^{\mathrm{CF}}, \bx\). Since \(\bF(.)\) is deterministic, we have:
	\begin{align}
		\bu^{\mathrm{CF}} \sim \bU^{\mathrm{CF}} \mid \bF^{-1}\left(\bx\right), y^{\mathrm{CF}}, \qquad \bx^{\mathrm{CF}} = \bF(\bu^{\mathrm{CF}}).
	\end{align}
\end{enumerate}

Now, consider a specific choice for the backtracking conditional distribution:
\begin{align}
	\Pp_B(\bu^{\mathrm{CF}} \mid \bu) \propto \exp\left\{ -d_X\left(\bF(\bu), \bF(\bu^{\mathrm{CF}})\right) - \lambda \; d_U\left(\bu, \bu^{\mathrm{CF}} \right) \right\}.
	\label{backdist}
\end{align}
Substituting this into the posterior distribution of \(\bU^{\mathrm{CF}} \mid \bF^{-1}(\bx), y^{\mathrm{CF}}\), we obtain:
\begin{align}
	\bU^{\mathrm{CF}} \mid \mathbf{F}^{-1}(\mathbf{x}), y^{\mathrm{CF}} 
	\propto 
	\begin{cases} 
		\exp\left\{ -d_X\left(\bx, \bF(\bu^{\mathrm{CF}})\right) - \lambda \; d_U\left(\bF^{-1}(\bx), \bu^{\mathrm{CF}} \right) \right\}, & \text{if } h(\mathbf{x}^{\mathrm{CF}}) = y^{\mathrm{CF}}, \\ 
		0, & \text{otherwise.}
	\end{cases}
\end{align}
Taking the logarithm on both sides, we have:
\begin{align}
	\log \Pp_B\left(\bu^{\mathrm{CF}} \mid \mathbf{F}^{-1}(\mathbf{x}), y^{\mathrm{CF}}\right) 
	\propto 
	\begin{cases} 
		-d_X\left(\bx, \bF(\bu^{\mathrm{CF}})\right) - \lambda \; d_U\left(\bF^{-1}(\bx), \bu^{\mathrm{CF}} \right), & \text{if } h(\mathbf{x}^{\mathrm{CF}}) = y^{\mathrm{CF}}, \\ 
		-\infty, & \text{otherwise.}
	\end{cases}
\end{align}
Thus, we have:
\begin{align}
	\arg \max_{\bu^{\mathrm{CF}}} \log \Pp_B\left(\bu^{\mathrm{CF}} \mid \mathbf{F}^{-1}(\mathbf{x}), y^{\mathrm{CF}}\right) 
	\equiv 
	\arg\min_{\mathbf{u}^{\mathrm{CF}}} 
	\quad & d_X\left(\bx, \bF(\bu^{\mathrm{CF}})\right) + \lambda \; d_U\left(\bF^{-1}(\bx), \bu^{\mathrm{CF}} \right), \nonumber \\
	\mathrm{s.t.} \quad & h(\mathbf{x}^{\mathrm{CF}}) = y^{\mathrm{CF}}.
	\label{finalequ}
\end{align}

As shown in (\ref{finalequ}), the optimization problem (\ref{backX3}) aligns with backtracking counterfactual explanations \eqref{sol3}. Therefore, our solution provides a valid interpretation based on backtracking counterfactuals.

\end{document}